
\documentclass{article}

\usepackage{times}
\usepackage{graphicx} 
\usepackage{subfigure} 

\usepackage{natbib}

\usepackage{algorithm}
\usepackage{algorithmic}

\usepackage{amsmath,amssymb,amsthm}

\usepackage{hyperref}



\usepackage[accepted]{icml2017}

\icmltitlerunning{Sample Efficient Feature Selection for Factored MDPs}

\newtheorem{theorem}{Theorem}
\newtheorem{lemma}{Lemma}
\newtheorem{corollary}{Corollary}
\newtheorem{assumption}{Assumption}

\newcommand{\algnamelong}{Feature Selection Explore and Exploit}
\newcommand{\algnameshort}{FS-EE}

\title{Sample Efficient Feature Selection for Factored MDPs}
\author{Zhaohan Daniel Guo\footnote{Carnegie Mellon University, Pittsburgh, CA}, Emma Brunskill\footnote{Stanford University, Stanford, CA}}
\date{}

 \begin{document}

\maketitle

\begin{abstract}
In reinforcement learning, the state of the real world is often represented by feature vectors. However, not all of the features may be pertinent for solving the current task. We propose \algnamelong~(\algnameshort), an algorithm that automatically selects the necessary features while learning a Factored Markov Decision Process, and prove that under mild assumptions, its sample complexity scales with the in-degree of the dynamics of just the necessary features, rather than the in-degree of all features. This can result in a much better sample complexity when the in-degree of the necessary features is smaller than the in-degree of all features.
\end{abstract}

\section{Introduction}
In many machine learning and AI control problems, choosing which features to represent the state of the domain is critical. Since the best representation is typically unknown, it is appealing to start with raw sensory input (like the pixels in a video game snapshot) or all possible features that might be relevant. Recent work in deep reinforcement learning \cite{mnih2013playing} has shown that it is possible to obtain great performance in some domains by using such 
representations. Unfortunately, the complexity of representing 
the dynamics, value function, and policies 
typically scales with the number of features, 
resulting in a large increase in the number of samples 
required to learn a good decision policy. While in some simulated domains this is not a critical limitation, in many high stakes domains 
(such as customer marketing, healthcare, education, and robotics) sample 
efficiency is very important. In many such RL settings, 
good performance still relies on using a small set of carefully hand-designed features. 
This process can be expensive, requiring domain experts to 
select the features, and may easily miss relevant features resulting in sub-optimal performance.
Much preferable are reinforcement learning algorithms 
whose sample complexity scales only with the number of relevant features needed to learn the optimal policy, and not on the total number of features defined.

In this paper we present theory that takes a step 
towards this goal, showing it is possible 
for an online RL algorithm, in the factored Markov Decision 
Processes setting, to achieve near-optimal average performance on all 
but a number of samples that scales with an important 
part of the model 
complexity of the necessary features, rather than the complexity of the whole feature set.  

In particular, we considered RL for Factored MDPs (FMDPs). FMDPs use feature vectors to represent states, enabling a compact encoding of real world domains. The sample complexity (the number of steps on which the algorithm may make non-near-optimal decisions) of RL algorithms for tabular MDPs scales at least linearly with the size of the state space \cite{strehl2009reinforcement}, which is exponential in the number of features if applied to FMDPs. Fortunately in an FMDP, the dynamics of each feature can depend on a small parent set of other features, so the sample complexity scales exponentially only with the size of the largest parent set (known as the in-degree) \cite{kearns1999efficient}. However, there exist 
many domains where some features' dynamics may be quite complex to model, 
but not be relevant to the underlying reward or value function. 
For example, when making tea, 
modeling the sky beyond the window may involve weather predictions, 
or knowledge of the date and time, 
but successful task completion may only rely on 
a sensor for water temperature. Video games often dedicate many pixels in order to show a pretty scene, but most of the time only the edges and outlines of objects actually matter to the gameplay. In such domains, the set of features necessary to learn the optimal value function may have a much smaller 
in-degree compared to all descriptive domain features. 

Our contribution is showing the existence of an RL algorithm whose sample complexity scales exponentially with only the in-degree (number of parents) of the necessary features. Our result is an exponential improvement over prior FMDL RL algorithms \cite{chakraborty2011structure} that do no feature selection, if the in-degree of the necessary features is smaller than the in-degree of the full feature set. Our algorithm does not assume knowledge of which nor how many features are necessary, nor any knowledge of parent sets. Even if the number of necessary features $M$ is given, a naive algorithm that attempts to figure out the set of necessary features by trying all $N$ (number of total features) choose $M$ subsets would yield a sample complexity of $O(N^M)$. $M$ can easily be larger than the in-degree of the necessary features, which would result in a substantially worse sample complexity.
   
Our key insight is to have the RL algorithm target specific goal states and leverage a failure to reach them to identify when the current guess for the in-degree is insufficient. Our approach assumes the domain has finite diameter, to ensure that we should be able to reach any state if we have good models, and also makes a mild assumption on the transition models that allows us to detect if the number of parents is insufficient without considering the full set of all features as parents. Our approach 
builds on work for RL in factored MDPs that does 
not require knowledge of the in-degree of an FMDP \cite{chakraborty2011structure} but goes significantly beyond this 
to tackle the feature selection problem 
during online learning. 

We focus on the theoretical improvement in sample complexity to show how to leverage feature selection in principle. Going forward we hope to leverage these insights towards practical FMDP RL algorithms with feature selection and guarantees of performance.

\section{Related Work}

Prior work on factored MDP RL with formal theoretical 
bounds include 
Met-RMax \cite{diuk2009adaptive} and LSE-RMax \cite{chakraborty2011structure}, 
which does not require prior knowledge of the in-degree; however, 
such work does not perform feature selection. More 
recent work has significantly reduced the 
sample complexity of learning FMDPs \cite{hallak2015off}, 
but requires strong structural assumptions and only apply to the batch setting, which  does not account for the trade-off between exploration and exploitation.

The closest prior work that does feature selection for FMDP performs it as a post-processing step after solving the FMDP and uses the learned features for transfer learning \cite{kroon2009automatic}. In contrast, our algorithm learns the necessary features while doing online reinforcement learning. We also provide a formal theoretical analysis which is the first, to our knowledge, for this setting; other prior work focus more on practical algorithms without formal guarantees such as using multinomial regression with LASSO \citep{nguyen2013online}.

There does exist work for feature selection for 
value function estimation, but not for the FMDP setting; OMP-BRM/TD, and iFDD are algorithms that do feature selection in the setting where the value function is a linear combination of the features \cite{painter2012greedy,geramifard2011online}. While their performance is primarily dependent on the number of necessary features, 
these approaches depend on the assumption of a linear value function.


Feature selection can also be viewed as a form of model selection, where each model is a particular selection of features. Prior work has theoretical bounds for model selection such as the OAMS algorithm \cite{ortner2014selecting}; however those bounds depend on the square root of the number of models. For FMDPs the number of models grows doubly exponential with the number of features.

More generally for MDPs without a feature vector representation, the concept of feature selection translates to state abstraction -- ignoring features is equivalent to clustering all the states that match on the necessary features. An example is the U-Tree algorithm \cite{mccallum1996reinforcement}. While many state abstraction algorithms perform well empirically, they lack formal guarantees. 

\section{Setting}

A finite FMDP is defined by a tuple $(S,A,P,R)$, where $S$ is a finite set of states, $A$ is a finite set of actions, $P$ is the transition distribution and $R$ is the reward distribution. Each state $s$ is a feature vector $(x_1, x_2,\dots, x_n)$ where $x_i \in Dom_i$ and $|Dom_i| = d$ \cite{kearns1999efficient}. The transition distribution factors over the state space i.e. $P(s_{t+1} | s_t, a_t ) = \prod_i P(x_{i, t+1} | s_t, a_t) = \prod_i P_{i}(s_{t+1} | Par_{i}(s_t), a_t)$. Each $P_{i}$ is the transition probability for feature $x_i$, dependent on its parent set of features $Par_{i}$. The notation $Par_{i}(s_t)$ denotes filtering the feature vector $s_t$ to only the features present in the set $Par_{i}$.
The reward is defined as $R(s,a) = \sum_j^{|R|} R_{j}(s,a)$, where each $R_{j}$ is an individual reward distribution for the $j$-th reward function.
$ P(R_{j}(s) | Par_{j}(s), a)$ is a discrete distribution with a domain of size $d$ just like a feature. Since features and rewards both utilize the same basic representation, the same approach 
can be used to learn feature transition dynamics and rewards. 
The in-degree of this FMDP is the size of the largest parent set over all features/rewards and actions i.e. $\max_{i} |Par_{i}|$.

Let $F'$ denote the set of all features. Given an FMDP, we assume there exists a set of necessary features $F$, such that if we ignored all features except the ones in $F$, we would get a smaller FMDP whose optimal value function is the same as the original FMDP.  This implies that the parents of features in $F$ are in $F$, and the parents of the rewards are also in $F$. 

We also assume the FMDP has a finite diameter $D$ \cite{ortner2007logarithmic}. A diameter $D$ means that for any two states $s_1,s_2$, the expected number of steps to go from $s_1$ to $s_2$ is at most $D$ under the best possible policy.

In this setting, the problem is to interact with an FMDP where the transition/reward dynamics are unknown (i.e. parent sets are unknown), the in-degree is unknown, and the necessary features are unknown, and execute a polity whose performance is $\epsilon$-close to the best possible policy. Interaction proceeds in steps, where in each step an algorithm takes an action, and observes the (stochastic) next state and reward. We measure performance of a policy $\pi$ using the average reward notion, where $U^\pi(s) = \lim_{T \rightarrow \infty} U^\pi(s, T)= \lim_{T \rightarrow \infty} \frac{1}{T}\mathbb{E}(\sum_{t=1}^T r_t | s_1 = s, \pi)$ \cite{kearns2002near}. Like prior FMDP work, we also assume $U^\pi(s)$ is independent of $s$ and can be denoted as just $U^\pi$ \cite{chakraborty2011structure}. Since we are working with finite samples, we assume the  $\epsilon$-return mixing time $T_{\epsilon}$ is given, same as in prior work \cite{chakraborty2011structure}. $T_{\epsilon}$ is such that for any policy $\pi$ and $T' \geq T_\epsilon$, $|U^\pi - U^\pi(T')| \leq \epsilon$ i.e. $T_\epsilon$ is long enough to see $\epsilon$-optimal average reward.

\section{Difficulty of Feature Selection}

There is an inherent difficulty when trying to detect if you have the correct in-degree or not. Consider the following domain with $5$ binary features. Feature $f_i$ depends on all features $f_j$ where $j \leq i$ as well as on $f_5$. Feature $f_5$ acts as a toggle that toggles between easier transitions and harder transitions i.e. when $f_5=0$, the other features have a low probability of transitioning to a value of 1; when $f_5=1$, they have a much higher probability. Additionally, $f_i$ only has the probability of changing its value under a particular setting of its parent set values (e.g. $f_2$'s parent set is $(f_1,f_2,f_5)$ and only has a nonzero probability of changing to value 1 when $f_1=1$ and $f_2=0$). With this domain, the true in-degree is $5$, since $f_5$ depends on all the features. 

Suppose at some point our learning algorithm guesses that the in-degree for all features is $J=3$, and tries to use parent sets of size $3$ for all features. The reason for trying in-degrees less than $5$ is that the number of samples required to learn a model scales exponentially with the in-degree. If we started with guessing $J=5$, then the sample complexity of our algorithm would depend on the in-degree for all features, rather than just the necessary ones. Since the in-degree for $f_5$ is $5$, any parent sets of size $3$ learned would be incorrect. Moreover, it would be very difficult to even transition to any states where $f_5=1$, because it only has the possibility of transitioning to a value of 1 when its parent set values are a particular setting. Thus for most of the time we will stay in states where $f_5=0$. However, $f_5$ is also a parent of feature $f_1$. If we only observe states where $f_5=0$, then it is impossible for an algorithm to detect whether $f_5$ is a parent of $f_1$. This is because we would not get any data from $f_5=1$ so we don't know whether the transition dynamics of $f_1$ would be affected by different values of $f_5$. Therefore, only when we guess the in-degree is $J=5$ would we finally be able to reliably learn the parent set for $f_5$ and also detect that $f_5$ is a parent of $f_1$.

Now suppose we make a slight tweak to the example where $f_5$ is no longer a parent of $f_1$. We would still have the result where we would almost always observe states where $f_5=0$ if we guess an in-degree of $J<5$. Then similarly we would not be able to detect whether $f_5$ is a parent of $f_1$. Hence, we can never be sure whether a feature is a parent of another feature until we have guessed a high enough in-degree. This means that if we start trying to eliminate features such as $f_5$ early on, it would have to be temporary; once we gather more data and guess a higher in-degree, we would need to reassess whether we can detect any new dependencies between features and reinstate eliminated features as being necessary.

Thus, our algorithm will start out guessing small in-degrees and slowly increment what it thinks the in-degree is. It will only temporarily eliminate features based on what it can detect for the current in-degree. Once it increments the in-degree, it will recheck which features should be eliminated. Therefore, the benefit of feature selection is precisely in being able to achieve optimal performance as soon as the guess for the in-degree is correct for the necessary features. Then it should maintain optimal performance as the in-degree is incremented further. This leads to being able to achieve optimal performance faster and with a data dependence only on the in-degree of the necessary features.

Some work has been done for feature selection in the related setting of Sparse Linear Stochastic Bandits \cite{abbasi2012online}, where they show a lower bound with strong partial dependence on the total number features. Our results avoid a strong dependence on the total number of features and the in-degree of all features by making a mild assumption called the Superset Assumption (below) that we believe is often applicable in practice.

\subsection{Intuition for a Superset Test}

As the example above shows, it can very difficult to detect when the in-degree is too small for a feature, and in particular, difficult to detect for unnecessary features. Thus we may not be able to eliminate all unnecessary features, which can wind up as potential parents for necessary features, resulting in incorrect parent sets for necessary features. However this example is quite extreme, and in practice, it may be very possible to quickly gather enough data to detect when the in-degree is too small.

Suppose we have an incorrect parent set of size $3$ for an unnecessary feature $f$. If we looked at the parent set of all features, then clearly this would include the true parent set and thus would give an accurate transition/reward estimate for $f$. Then we would be able to detect that our parent set of size $3$ is incorrect since it would give a different transition/reward estimate. In practice, we may not need to compare to the parent set of all features; it may be enough to compare to a parent set that contains a few more true parent features than our incorrect parent set, since those additional parent features can result in a significantly different transition/reward estimate. We call this comparison process the Superset Test. We make an associated assumption called the Superset Assumption which says that looking at all supersets of double the size is enough to find a superset with a significantly different transition/reward estimate. By restricting the supersets to be at most double the size of the parent set we want to check, we can bound the number of samples required to perform this check.

Due to the pigeonhole principle, the data requirement for testing all supersets of some size $2J$ is only exponential in $2J$, which enables us to get a good bound. Note that at every step, because we observe all features of a state, we get data for all supersets of size $2J$. Thus we can define each superset of size $2J$ as a bucket, and every step we get a new sample for all buckets. In each bucket, we have $d^{2J}$ more buckets, one for each possible setting of values. We can apply the Pigeonhole principle to each of these inner buckets, which means we will need $O(d^{2J}m)$ steps to guarantee that some setting of values gets $m$ samples for every superset of size $2J$.

\section{Algorithm}

We first provide a basic overview of our algorithm, Algorithm \ref{alg:main}: \algnamelong~(\algnameshort), before discussing details. 

Algorithm \ref{alg:main} proceeds by fixing a possible in-degree $K$, starting with $K=1$ up to the total number of features. This is similar to LSE-RMax \cite{chakraborty2011structure}. For each $K$, our algorithm first performs exploration and feature selection (Algorithm \ref{alg:episode}) to identify the set of features whose dynamics can be modeled well with in-degree $K$. Then once it has selected those features, it calls a PAC RL algorithm for factored MDPs which then only considers the selected features and treats all other features as nonexistent.

We will prove in Section \ref{sec:theorysketch} that under the diameter assumption and the Superset Assumption, suitable instantiations of constants, and a suitable PAC RL algorithm for FMDPs, \algnameshort~obtains  
near optimal average reward on all but a number of steps that depends only on the (unknown) in-degree of the necessary features. Note that we do not need to know the number of necessary features in advance either. This translates to an exponential savings in samples if the in-degree of unnecessary features is larger than the in-degree of the necessary features; our algorithm will start acting near-optimally as soon as $K$ is at least as large as the in-degree of the necessary features. However, before $K$ is at least as large as the in-degree of the necessary features, there are no guarantees on performance and nothing can be determined about the features that are temporarily eliminated.

\begin{algorithm}[h]
	\begin{algorithmic}[1]
    \STATE \textbf{Input:} $m_1$, $m_2$, $H$
  \FOR{$K=1$ to \# of features}
   \STATE $F$=LearnAndSelect($K$,$m_1$,$H$)
   \STATE PAC--FactoredMDP--RL($K$, $F$,$m_2$)
 \ENDFOR
 \end{algorithmic}
 \caption{\algnameshort}
 \label{alg:main}
\end{algorithm}

\begin{algorithm}[h]
	\begin{algorithmic}[1]
 \STATE \textbf{Input:} $K$, $m$, $H$
\STATE Set $G$ as all possible $F$ choose $2K$ feature-value vectors
   \WHILE{Exists a element $g$ of $G$ not visited $m$ times} \label{lin:while}
   \STATE Create MDP $M$ where reward for states matching $g$ is $R_{max}$, reward for all other states is 0 \label{lin:targetmdp}
   \STATE Compute optimistic policy $\pi_o$ for $M$
   \STATE stuck=True
	   \FOR{$t=1$ to $H$} \label{lin:befor}
	     \STATE Run $\pi_o$
         \STATE Use Adaptive $k$-Meteorologist Algorithm to update possible parent sets for each feature \label{lin:adak}
         \STATE If a feature-value vector that has not yet been visited $m$ times is visited, set stuck=False and break out of loop
	   \ENDFOR \label{lin:endfor}
	  \IF{stuck}
	  	\STATE Superset Test: Eliminate possible parent sets that predict significantly different dynamics than their supersets \label{lin:ss}
        \STATE Shrink $F$ to be remaining features $F$ that still have possible parent sets \label{lin:elim}
        \STATE Shrink $G$ to be all remaining $F$ choose $2K$ feature-value vectors
	  \ENDIF
   \ENDWHILE \label{lin:endwhile}
   \STATE \textbf{Return} remaining $F$
   \end{algorithmic}
 \caption{LearnAndSelect}
 \label{alg:episode}
\end{algorithm}

We now describe our algorithm in further detail. The purpose of Algorithm \ref{alg:episode} is twofold. One, it explores to gather $m$ samples for all potential parent sets to learn an accurate dynamics model. Two, under the diameter assumption and Superset Assumption, it detects features whose transition/reward dynamics cannot be accurately modeled using a parent set of size $K$ and eliminates them.

To achieve both goals, Algorithm \ref{alg:episode} repeatedly picks a target feature-value vector of size $2K$ to visit (line \ref{lin:while}--\ref{lin:endwhile}). A feature-value vector is a particular instantiation of values for a set of features. We will describe Algorithm \ref{alg:episode} through a concrete example. Consider an FMDP with 3 binary features $(f_1, f_2, f_3)$. Suppose $K=1$. Then all feature-value vectors of size $2$ for $(f_1, f_3)$ are simply $(f_1=0,f_3=0),(f_1=0,f_3=1),(f_1=1,f_3=0),(f_1=1,f_3=1)$. Then the first thing Algorithm \ref{alg:episode} does is form the set $G$ of all possible feature-value vectors for all subsets of 2 features: $(f_1,f_2),(f_1,f_3),(f_2,f_3)$. Then the algorithm attempts to target exploration to one particular feature-value vector. Let $g = (f_1=0,f_2=1)$ be the first target. To reach $(f_1=0,f_2=1)$, Algorithm \ref{alg:episode} creates an FMDP where the reward function for any state that matches $g$ is $R_{\max}$ i.e. states $(0,1,0),(0,1,1)$ (line \ref{lin:targetmdp}). The reward for all other states is 0. Then an optimistic policy $\pi_o$ is computed and followed for up to $H$ steps to try to visit states that match the target feature-value vector (line \ref{lin:befor}--\ref{lin:endfor}).

The optimistic policy is computed by choosing a parent set for each feature out of the possible parent sets that results in an FMDP with the largest optimal value. We also use the Adaptive $k$-Meteorologists Algorithm \cite{diuk2009adaptive} to update potential parent sets and the dynamics model from the gathered data during exploration (line \ref{lin:adak}); see section \ref{subsec:meteor} for an overview.

Following the optimistic policy $\pi_o$ will then result in 3 possible outcomes (under the diameter assumption): 1) it will reach the target feature-value vector, 2) it will visit some other feature-value vector that has not yet been visited $m$ times, or 3) it will get stuck visiting already visited feature-value vectors. This is similar to the PAC-EXPLORE algorithm \cite{guo2015concurrent}. The first two outcomes are both good and end up collecting data towards feature-value vectors that have not yet been visited $m$ times. The third outcome implies that one of the parent sets used for computing $\pi_o$ is incorrect. This is because the diameter assumption guarantees that any state is reachable on average in $D$ (diameter) steps. If all of the parent sets used in computing the optimistic policy $\pi_o$ were correct, then $\pi_o$ would be expected to reach the target feature-value vector within $O(D^2)$ steps. Thus we can detect that we have not reached the target in the expected time nor accumulated any new samples for less visited feature-value vectors, and determine that we are stuck.

If we are stuck, then we perform the Superset Test (line \ref{lin:ss}) to figure out which parent sets are incorrect and eliminate them. The Superset Test will compare the predictions of all potential parent sets of all features with the predictions of all of their possible supersets of size $2K$. This leverages the idea that if a parent set is the correct parent set for a feature, then any superset of that parent set will give the same prediction; thus if a superset gives a different prediction then that parent set must be missing necessary features. The Superset Assumption guarantees that if we are stuck, then the Superset Test will always find at least one incorrect parent set to eliminate.

Suppose we have explored for some time and our current target is now $g = (f_2=1,f_3=1)$. Suppose we get stuck. Then after $H$ steps, we would detect that we got stuck and then perform the Superset Test. Suppose we first look at feature $f_2$ and the possible parents remaining are $f_2$ and $f_3$ i.e. $f_1$ has been eliminated. The Superset Test will test the predictions of each parent set with all of their possible supersets. For the parent set $(f_2 = 0)$, the supersets of size 2 are $(f_2 = 0, f_3 = 0)$ and $(f_2 = 0, f_3 = 1)$. Let $\hat{P}(f_2 = 0 | f_2 = 0)$ be the estimated transition for $f_2$ with parent set $(f_2 = 0)$. Let $\hat{P}(f_2 = 0 | f_2 = 0, f_3 = 0)$ be the estimated transition for $f_2$ with the parent set $(f_2 = 0, f_3 = 0)$. The Superset Test will check whether $|\hat{P}(f_2 = 0 | f_2 = 0, f_2 = 0) -  \hat{P}(f_2 = 0 | f_2 = 0)|$ is above some threshold. If it is, then it would mean that $f_2$ is not a parent of $f_2$ and so both parent sets $(f_2 = 0)$ and $(f_2 = 1)$ would be eliminated. The Superset Test then continues with the other possible parents and their supersets, and then will check the other features in the same way. If all possible parent sets for a feature $f$ has been eliminated, then $f$ itself is eliminated (line \ref{lin:elim}).

Each time Algorithm \ref{alg:episode} targets a feature-value vector for exploration, either some target that has not yet been visited $m$ times will get visited, or the Superset Test will eliminate a potential parent for a feature. Thus eventually the algorithm will terminate. Once Algorithm \ref{alg:episode} terminates, we are left with the remaining features $F$ which we know that our model is capable of reaching and exploring. Thus Algorithm \ref{alg:main} calls a generic PAC RL algorithm for FMDPs with in-degree $K$ with remaining features $F$, ignoring the eliminated features completely.

\subsection{Adaptive $k$-Meteorologist}
\label{subsec:meteor}

The Adaptive $k$-Meteorologists Algorithm is used to update the predictions of parent sets as well as to eliminate incorrect parent sets as more and more data is accumulated \cite{diuk2009adaptive}. It maintains MLE estimates for transition and reward functions for every possible parent set of every feature. It also keeps track of the mean squared error (MSE) of those MLE estimates for every parent set. Once two different parent sets for a feature $f$ have an accurate enough MSE estimate but different predictions, the algorithm will eliminate the parent set with the higher MSE. Thus eventually with enough data, all the parent sets that remain for a feature $f$ will reach consensus in their predictions. Note that because this algorithm only performs pairwise comparisons of parent sets, it will always leave at least one parent set for every feature, thus this cannot be used to eliminate features as it cannot detect when all parent sets of some size $J$ are incorrect.

\section{Theoretical Analysis}
\label{sec:theorysketch}

This section presents the main theorem as well as the supporting lemmas for the performance of \algnameshort. In the first section (Section \ref{sub:ass}), we present the assumptions we make. In Section \ref{sub:main} we present the main theorem and proof, which relies on two large lemmas. In Section \ref{sub:small}, we present several small lemmas that are used in the large lemmas. In Section \ref{sub:adak} we review the theory behind the Adaptive $k$-Meteorologist algorithm, which is used in the large lemmas. In Section \ref{sub:episode}, we build up to the first large lemma used in the main theorem. In Section \ref{sub:exploit} we present the other large lemma used in the main theorem.

Detailed proofs are in the appendix.

\subsection{Assumptions}
\label{sub:ass}

We first present the two main assumptions that we make.

\begin{assumption}
\label{ass:diameter}
(Diameter Assumption)
We assume the FMDP has diameter $D$. A diameter $D$ is defined as follows. Let $s,s'$ be any two states. Let $D_\pi(s,s')$ be the random variable for the number of steps it takes policy $\pi$ to start at $s$ and reach $s'$ the first time. Let $D(s,s') = \min_{\pi}\mathbb{E}(D_{\pi}(s,s'))$. A diameter $D$ means that $D \geq \max_{s,s'}D(s,s')$. It is an upper bound on the expected number of steps it takes to go between any two states.
\end{assumption}

\begin{assumption}
	\label{ass:superset}
    (Superset Assumption)
	Let $W$ be the wrong parent set for feature $f_i$. Let $U$ be the true parent set. Suppose new data has the probability of visiting a state where $W$ gives an $O(\epsilon)$-incorrect estimate of the transition probability being at least $O(\epsilon)$. Then there exists a superset $W'$ of $W$ where $W' \subset (W \cup U)$ and $|W'| \leq 2K$ such that for all settings of values $W' - W$, the transition estimate using $W$ differs more than $O(\epsilon)$ from the transition estimate using $W'$ (that is estimated from new data).
\end{assumption}

\subsection{Main Theorem}
\label{sub:main}

In this section, we give the main theorem and its proof. Note that for \algnameshort~we set $m_1 = m_2 = m$ and the precise value for $m$ is determined later (eqn \ref{eqn:m}).

\begin{theorem}
	\label{thm:main}
	Given $\epsilon>0, 0.5 > \delta > 0$. Let $T_{\epsilon,D} =\max(D,T_\epsilon)$. Let $J$ be the in-degree of the necessary features. Recall $n$ is the total number of features. Then the following is true of \algnameshort~with probability $1-\delta$
	\begin{enumerate}
		\item The total number of steps taken up to $K=J$ is $O\left(\frac{J^4 T_{\epsilon,D}^{18} n^{4J+c} |A|^4 d^{8J+c} R_{\max}^4}{\epsilon^6}\log\left(\frac{n|A|dR_{\max}KT_{\epsilon,D}}{\epsilon\delta}\right)\right)$ for some constant $c$
		\item For all $K \geq J$ i.e. at least as large as the in-degree of the necessary features, the average reward is $\epsilon$-optimal i.e. $|U - U^*| \leq \epsilon$
	\end{enumerate}
\end{theorem}

\begin{proof} (Sketch)
This follows from putting together the two large lemmas Lemma \ref{lem:selection} and Lemma \ref{lem:rlwithselected} to count the number of steps taken for each $K$. Summing over $K$ up to $K=J$ and plugging in $m$ results in the stated bound. For $K \geq J$,  $\epsilon$-optimality follows Lemma \ref{lem:rlwithselected}.
\end{proof}

\subsection{Small Lemmas}
\label{sub:small}

\begin{lemma}
	\label{lem:necessary}
    (Necessary Feature Lemma)
	For any policy $\pi$
	\begin{align}
	Q^{\pi}(s, a, T) = Q^{\pi}(z, a, T)
	\end{align}
	where $z$ is the state $s$ restricted to only the necessary features from $F$ i.e. $Q$-functions only depend on the necessary features. Furthermore, the transition dynamics of the unnecessary features have no effect on $Q$-functions.
\end{lemma}

\begin{proof}(Sketch)
This follows by induction on value iteration, and the observation that reward functions and necessary features only depend on necessary features for their dynamics.
\end{proof}

\begin{lemma}
	\label{lem:simulation}
    (Simulation Lemma) \cite{kearns1999efficient}
	Let $M$ be an FMDP over $n$ state variables with $l$ CPT entries in the transition model. Let $M'$ be an approximation to $M$ where all the CPTs differ by at most $\alpha = O((\epsilon/T^2 l R_{\max})^2)$. Then for any policy $\pi$, $|U^\pi_M(T) - U^\pi_{M'}(T)| \leq \epsilon$. Subbing in $l$ and noting that $\alpha$ error in rewards is already covered results in $\alpha = O((\epsilon/(T^2 nd^{K}|A| R_{\max}))^2)$.
\end{lemma}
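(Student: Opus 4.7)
The plan is to combine a hybrid (telescoping) argument on the factored joint transition with a standard telescoping bound on the $T$-step return, which is the usual route for simulation-style results in FMDPs. Since $U^\pi_M(T)$ is the $T$-step expected return divided by $T$, it suffices to bound $|V^\pi_M(s,T) - V^\pi_{M'}(s,T)|$ by $T\epsilon$, where $V^\pi(s,T) = \mathbb{E}[\sum_{t=1}^T r_t \mid s_1=s,\pi]$.

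First I would bound the total variation between the one-step joint kernels $P_M(s'\mid s,a) = \prod_i P_{i,M}(x_i'\mid Par_i(s),a)$ and $P_{M'}(s'\mid s,a)$. Using the hybrid identity
\[
\prod_i p_i - \prod_i q_i \;=\; \sum_{j}\Bigl(\prod_{i<j} p_i\Bigr)(p_j - q_j)\Bigl(\prod_{i>j} q_i\Bigr)
\]
and summing over $s'$, one gets
\[
\|P_M(\cdot\mid s,a) - P_{M'}(\cdot\mid s,a)\|_1 \;\leq\; \sum_i \|P_{i,M}(\cdot\mid Par_i(s),a) - P_{i,M'}(\cdot\mid Par_i(s),a)\|_1.
\]
Given that each CPT entry is $\alpha$-close, converting to a per-factor TV bound (this is where the square in the statement enters, via the norm conversion between entry-wise closeness and TV closeness of the joint) and summing over the $n$ factors yields a joint one-step TV bound of order $l\sqrt{\alpha}$, uniformly in $(s,a)$.

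Next I would induct on the horizon. Writing
\[
V^\pi_M(s,t) - V^\pi_{M'}(s,t) = (R_M - R_{M'})(s,\pi) + \sum_{s'}(P_M - P_{M'})(s'\mid s,\pi)\,V^\pi_M(s',t{-}1) + \sum_{s'} P_{M'}(s'\mid s,\pi)\bigl(V^\pi_M(s',t{-}1) - V^\pi_{M'}(s',t{-}1)\bigr),
\]
and bounding $|V^\pi_M(\cdot, t{-}1)| \leq (t{-}1)R_{\max}$, the transition-difference term contributes $(t{-}1)R_{\max} \cdot O(l\sqrt{\alpha})$ per step. Unrolling the recursion over $T$ steps gives a cumulative error of order $T^2 R_{\max}\, l\sqrt{\alpha} + T\alpha$; requiring this to be at most $T\epsilon$ and solving for $\alpha$ produces $\alpha = O\bigl((\epsilon/(T^2 l R_{\max}))^2\bigr)$. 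Substituting $l = \Theta(nd^K|A|)$ and observing that the additive $T\alpha$ reward contribution is dominated under this choice recovers the final form stated in the lemma.

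The main obstacle is keeping the bookkeeping on factored distributions honest: the hybrid step must be applied so that the per-factor error does not pick up a spurious $d^K$ factor from enumerating all parent configurations (only the parent configuration actually encountered at each step contributes), and the joint TV bound must hold uniformly in $(s,a)$ so it can be pulled outside the expectation in the telescoping induction. Once those two points are handled cleanly, the remaining induction and the substitution of $\alpha$ are mechanical.
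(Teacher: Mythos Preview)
The paper does not actually prove this lemma: it imports the core bound as a black box from \cite{kearns1999efficient} and only performs two bookkeeping steps on top of it, namely (i) substituting $l = O(n|A|d^{K})$ for the number of CPT entries, and (ii) observing that an additive reward error of $\alpha$ contributes at most $\alpha T \le \epsilon$ to $U^\pi(T)$, so the same choice of $\alpha$ already absorbs the reward perturbation. There is no derivation of the $|U^\pi_M(T)-U^\pi_{M'}(T)|\le\epsilon$ inequality itself anywhere in the paper.

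Your sketch therefore goes well beyond what the paper does: the hybrid identity on the factored joint kernel, the aggregation of per-factor $L_1$ errors, and the horizon telescoping are exactly the standard route to FMDP simulation lemmas, and your final substitution of $l$ together with the $T\alpha$ reward remark coincide with the only steps the paper actually carries out. The one place to tighten is your claim that the joint one-step TV is of order $l\sqrt{\alpha}$, justified only by a parenthetical ``norm conversion.'' A plain hybrid argument from entry-wise $\alpha$-closeness of CPTs yields a TV bound that is \emph{linear} in $\alpha$, not $\sqrt{\alpha}$; the quadratic form $\alpha = O\bigl((\epsilon/(T^2 l R_{\max}))^2\bigr)$ is a feature of the specific formulation in \cite{kearns1999efficient} rather than of a generic norm inequality. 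Since a linear-in-$\alpha$ bound is stronger (it permits a larger $\alpha$), your induction already implies the squared statement as written, so the conclusion survives; but if you want the exponent to arise honestly you should either trace the original Kearns--Koller argument or simply state the tighter linear requirement and note that it dominates the quoted bound.
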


\begin{lemma}
	\label{lem:ee}
    (Explore or Exploit Lemma)
	Fix a policy $\pi$. Let $M$ and $M_K$ be MDPs such that $M$ and $M_K$ agree on some states, but differ in dynamics and rewards for other states. Then $|U^{\pi}_{M_K}(T) - U^{\pi}_{M}(T)| \leq T R_{\max}P(escape)$ where $P(escape)$ is the probability of visiting a state in which the two models differ.
\end{lemma}

\begin{proof}(Sketch)
The key observation is that trajectories that do not escape are identical for both $M$ and $M_K$, thus the probability of escape is the same.
\end{proof}

\begin{corollary}
	Suppose $\pi_1$ is the optimal policy for $M_k$ and $\pi_2$ is the optimal policy for $M$. Suppose $U^*_{M_K}(T) \geq U^*_{M}(T)$ i.e. $M_K$ is optimistic. Then $U^{\pi_1}_{M_K} \geq U^{\pi_2}_{M} - T R_{\max}P(escape)$.
\end{corollary}

\subsection{Adaptive $k$-Meteorologist Algorithm}
\label{sub:adak}

This is from the Adaptive $k$-Meteorologist Algorithm \cite{diuk2009adaptive}. Similar to Met-RMax \cite{diuk2009adaptive}, each sub-algorithm of our algorithm is a candidate parent set of features of size $K$. Thus there are $k = \binom{n}{K}$ sub-algorithms. By Hoeffdings, we need $O(\frac{d}{\epsilon_1^2}\log(d/\delta_1))$ samples to learn the discrete distribution of a parent set to an $L_1$ accuracy of $\epsilon_1$ (we apply Hoeffdings $d$ times, learning the probability of each outcome with Hoeffdings). There are $d^K$ possible sets of values for a parent set of size $K$ so the sample complexity of each sub-algorithm is $O\left(d^K \frac{d}{\epsilon_1^2}\log(d/\delta_1)\right)$. Then the sample complexity of the Adaptive $k$-Meteorologist algorithm can be simplified to $O\left(\frac{n^K d^{K+1} K }{\epsilon_1^2} \log \frac{n}{\delta}\right)$

\subsection{LearnAndSelect}
\label{sub:episode}

\begin{lemma}
	\label{lem:episode}
    (Exploration Episode Lemma)
	The following holds w.p. $1-\delta_1$. At the end of each iteration of the while loop (line \ref{lin:while} -- line \ref{lin:endwhile}) in the LearnAndSelect algorithm (Algorithm \ref{alg:episode}), one of two things will happen: either the target $g$ or another feature-value vector that has not been visited $m$ times will be visited, or some feature will be eliminated as a possible parent for some other feature.
\end{lemma}

\begin{proof}(Sketch)
The key is Lemma \ref{lem:ee} and the diameter assumption. The diameter assumption tells us that it is possible to reach the goal in expected $D$ steps. By the Markov Inequality, the probability of reaching the goal within $2D$ steps is at least $\frac{1}{2}$ with some policy. Thus if we have a good policy, then we simply need to keep running it many times and we can reach the goal with high probability.

In the other case of Lemma \ref{lem:ee} where the probability of escape is significant, we can accumulate new data for another feature-value vector, or new data to run the Superset Test. This data is also used towards Adaptive $k$-Meteorologists to help with eliminating incorrect parent sets. For the Superset test, we apply the Superset assumption (Assumption \ref{ass:superset}). Then we know that whenever we run the superset test, we will eliminate at least one parent set. To get enough data, we need
\begin{align}
H = O\left(\frac{d^{6K+1}n^{4} K^2 |A|^4 R_{\max}^4}{\epsilon^5 \max(D,T_\epsilon)^{-18}}\log\left(\frac{nm}{\delta_1}\right)\right) \label{eqn:supersetdata}
\end{align}

The value of $m$ comes from the sample complexity result of learning a single parent set from Adaptive $k$-Meteorologist, and from Lemma \ref{lem:simulation}:
\begin{align}
m=O\left(\frac{K^2 n^4|A|^4d^{4K+1} R_{\max}^4}{\epsilon^4 \max(D,T_\epsilon)^{-16}}\log(nd/\delta_1)\right) \label{eqn:m}
\end{align}
\end{proof}

\begin{lemma}
	\label{lem:selection}
    (LearnAndSelect Lemma)
	The following holds w.p. $1-\delta_1$. After LearnAndSelect (Algorithm \ref{alg:episode}) is finished, all targets $g$ will either have been visited $m$ times, or one of its features will have been eliminated. If $K \geq J$, all necessary features will remain. This will take $O\left(\frac{K^2 n^{3K+4}|A|^4d^{7K+1} R_{\max}^4}{\epsilon^5 \max(D,T_\epsilon)^{-18} }\log(nm/\delta_1)\right)$ steps.
\end{lemma}

\begin{proof}(Sketch)
The first part follows from Lemma \ref{lem:episode}. To count steps, note every time the while loop is run, either some new feature-value vector is visited, or we perform a Superset Test. For superset tests, there are $O((dn)^{K})$ tests (each test eliminates at least one parent set), so there are $O(H(dn)^{K})$ steps. For visiting feature-value vectors, all $O((dn)^{3K})$ (an upper bound on the number of possible feature-value vectors of size $2K$) targets need to be visited $m$ times so $O((dn)^{3K}m)$ visits are needed. Thus, the total count combines the count of those two cases.
\end{proof}

\subsection{Instantiating PAC-FactoredMDP-RL}
\label{sub:exploit}

\begin{lemma}
	\label{lem:rlwithselected}
    (PAC-FactoredMDP-RL Lemma)
	Suppose $K \geq J$. Then we instantiate PAC-FactoredMDP-RL with a specific algorithm that is a simple variant of Adaptive $k$-Meteorologist (see appendix). Then it will execute for $O\left(\frac{K^2  n^{3K+4}|A|^4d^{7K+1} R_{\max}^4}{\epsilon^5 \max(D,T_\epsilon)^{-18}} \log(nm/\delta_1)\right)$ steps. The average reward over LearnAndSelect (Algorithm \ref{alg:episode}) and PAC-FactoredMDP-RL combined is $\epsilon$-optimal, with probability $1-\delta$.
\end{lemma}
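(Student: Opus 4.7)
The plan is to instantiate PAC-FactoredMDP-RL as a variant of Adaptive $k$-Meteorologist operating only on the reduced feature set $F$ returned by LearnAndSelect. Since $K \geq J$, Lemma \ref{lem:selection} guarantees that $F$ contains every necessary feature and that each surviving parent-value vector has already been visited $m$ times, so those samples can be reused to warm-start the MLEs of the meteorologist. The Necessary Feature Lemma (Lemma \ref{lem:necessary}) then lets me reduce the argument to proving near-optimality on the restricted FMDP over $F$, which has in-degree at most $K$.

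The first substantive step would be an explore-or-exploit argument. At every step the algorithm forms an optimistic policy $\pi_o$ by choosing, for each feature in $F$, the surviving parent set yielding the largest optimal value under the current MLE dynamics, and running value iteration on the resulting restricted FMDP. By the Explore or Exploit Lemma (Lemma \ref{lem:ee}) and its corollary, $\pi_o$ either (a) achieves value within $\epsilon$ of optimal in the true restricted FMDP, or (b) with probability at least $\Omega(\epsilon/(T_{\epsilon,D} R_{\max}))$ visits a state at which two surviving parent sets disagree. In case (b) the meteorologist either tightens a pairwise MSE gap or kills a parent set. Combining the per-sub-algorithm bound from Section \ref{sub:adak} with the CPT accuracy $\alpha$ demanded by the Simulation Lemma (Lemma \ref{lem:simulation}), I would bound the total number of case-(b) occurrences by the $k$-Meteorologist sample complexity with $n$ replaced by $|F| \leq n$ and $\epsilon_1 = \alpha$, which after substitution matches the step count in the statement.

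To get the average-reward guarantee I would use the standard PAC amortization. Case-(a) steps are individually $\epsilon$-optimal via the Simulation Lemma, while case-(b) steps and the LearnAndSelect steps from Lemma \ref{lem:selection} are deterministically bounded by the same $O(\cdot)$ expression. Choosing the combined horizon at least a factor of $T_{\epsilon,D}/\epsilon$ times this total exploration count then forces the long-run average reward to lie within $\epsilon$ of $U^*$ (translating single-step suboptimality budgets into average-reward budgets via the $\epsilon$-mixing time). A union bound over the failure event of LearnAndSelect, over the at most $\binom{n}{K}$ parent-set MLE concentration events, and over the case-(b) concentration event, with $\delta_1 = \delta/\operatorname{poly}(n,K)$, yields the final $1-\delta$ guarantee.

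The main obstacle I anticipate is this average-reward conversion. Unlike discounted PAC-MDP, I must amortize a purely exploratory phase (LearnAndSelect) whose reward is arbitrary against an explore-or-exploit phase in a way that keeps the long-run average within $\epsilon$. The careful bookkeeping among $\epsilon$, $\delta_1$, $m$ in \eqref{eqn:m}, and $T_{\epsilon,D}$ will be delicate; I expect an additional $T_{\epsilon,D}$ factor to appear when passing from the single-step explore-or-exploit bound to the $T_\epsilon$-step average-reward notion, which is why the exponents $18$ on $T_{\epsilon,D}$ and large powers on $n, d$ in the step count are consistent with the bound claimed in Lemma \ref{lem:selection} and the theorem.
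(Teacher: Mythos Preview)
Your proposal is essentially the same approach as the paper's proof: optimistic planning over the surviving parent sets, the Explore-or-Exploit dichotomy (Lemma~\ref{lem:ee}), the Adaptive $k$-Meteorologist sample complexity combined with the Simulation Lemma accuracy $\alpha$, and an amortization argument that runs PAC-FactoredMDP-RL long enough (a $1/\epsilon$ blow-up) to absorb both the escape steps and the LearnAndSelect phase into the average.

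One imprecision worth fixing: you assert that the restricted FMDP over $F$ ``has in-degree at most $K$.'' This need not hold---$F$ may still contain unnecessary features whose true in-degree exceeds $K$, and the algorithm will model their dynamics incorrectly. The paper does not claim the restricted FMDP is well-specified; instead it invokes the Necessary Feature Lemma (Lemma~\ref{lem:necessary}) at the point of comparing values: since the $Q$-function depends only on the necessary features, the (possibly wrong) dynamics the meteorologist assigns to surviving unnecessary features are irrelevant to the optimistic value computation. Your invocation of Lemma~\ref{lem:necessary} is in the right place, but its consequence is ``unnecessary-feature dynamics are ignorable for value,'' not ``the restricted FMDP has small in-degree.'' With that correction the argument goes through as you outline.
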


\begin{proof} (Sketch)
We know that PAC-FactoredMDP-RL will achieve near-optimal performance, so all we need is to run it long enough to offset the sub-optimal LearnAndSelect i.e. so that the total reward averaged over the running time of both is $\epsilon$-close.
\end{proof}

\section{Experiments}

We conducted a small experiment to show the possibility of practical improvement through feature selection. Similar to prior work \cite{chakraborty2011structure}, we did not run the additional targeted exploration to eliminate features of our approach. However, we do a passive form of feature selection by continuing to evaluate whether we can eliminate features by running a background superset test on the data collected so far: note this does not involve any explicit data collection, but can eliminate features. 
When we increment the in-degree and some of the transition and reward functions have not yet accumulated enough data for the new in-degree, we fall back on the previous in-degree's model.

To illustrate our ideas, we consider a small new toy domain, Toggle.  Toggle has one binary necessary feature $f_1$ and potentially two binary unnecessary features $f_2,f_3$. There are two actions, $a_1,a_2$. The parent set of $f_1$ is $(f_1)$, and the parent sets of $f_2,f_3$ are both $(f_2,f_3)$; this means the in-degree of necessary features is $1$ and the in-degree of the whole domain is $2$. We use $m=100$. The two unnecessary features $f_2,f_3$ satisfy the Superset Assumption. Full domain details are provided in appendix B.


\begin{figure}[h]
	\includegraphics[scale=0.4]{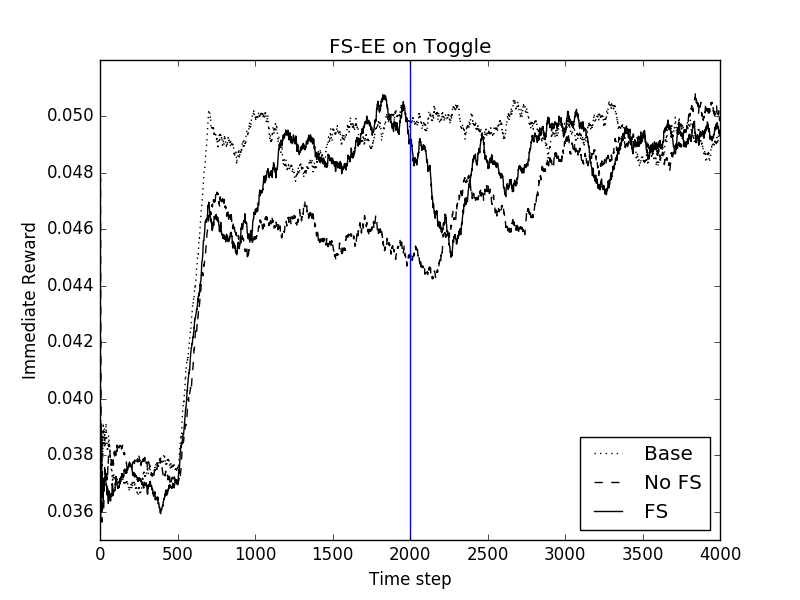}
	\caption{Feature selection (FS) can eliminate unnecessary features early for good performance, and when the in-degree is incremented at step 2000, suffers only a little drawback before becoming optimal.}
    \label{fig:immediate}
\end{figure}

We roughly optimized the number of steps before the in-degree is incremented to 2000 for best performance on the domain without feature selection. We also ran FS-EE on the base domain which only has the necessary feature as a measure of the best possible performance.
 
Introducing unnecessary features slows down learning, but when it is possible to start eliminating features, it can cut short the time needed to learn the unnecessary features and more quickly converge to a good policy. Figure \ref{fig:immediate} is the graph of their immediate rewards, and shows this behavior where feature selection improves performance for the first 2000 steps over no feature selection. This is due to quickly eliminating the unnecessary features and quickly learning the optimal policy. Then at step 2000 when the in-degree is incremented to 2, it suffers a little before climbing back up and matching the performance of the others. This dip in performance is due to the eliminated features becoming necessary again and thus a little more data is needed to learn them. This behavior matches what we expect -- the benefit of feature selection is due to near-optimal performance as soon as the in-degree is correct for the necessary features. Later on when the in-degree is incremented to 3, feature selection continues to match the performance of the others, with no dip in performance. 

Toggle illustrates the benefit of our approach in a simple setting. We also compared our approach to prior Factored PAC MDP RL algorithms LSE-RMAX\cite{chakraborty2011structure} and MET-RMAX\cite{diuk2009adaptive} on a standard FMDP domain, Stock Trading. This is mostly a sanity check, as all variables are required to represent the optimal values in this domain. Indeed, our approach is comparable to prior work: our algorithm FS-EE achieves a cumulative reward of approximately 5500 which is greater than MET-RMAX and near LSE-RMAX.


\section{Discussion and Conclusion}
We proposed the algorithm \algnameshort~for performing feature selection while solving FMDPs. We showed it has a sample complexity dependence that scales as an exponential function of the in-degree of the necessary feature set, potentially an exponential improvement over the in-degree of the 
full feature set. We illustrated that if feature selection is not needed, our approach is comparable to prior PAC RL algorithms for factored MDPs, but if feature selection is needed, our approach can lead to significantly improved performance on a toy domain. 

Many interesting questions remain. In the near term, it would be nice to investigate if it is possible to stop incrementing the cardinality of the parent sets in the dynamics model of necessary features, $K$ after a certain point.  This would halt any need for further exploration if larger parent sets are required.  We have discussed this possibility and presented an example where it seems extremely difficult. Perhaps a more tangible question is whether additional mild assumptions can be made to make this possible.
Another second short term question is whether we can further relax the Superset assumption, though it seems likely to apply in most practical problems. 

Over the longer term, one key issue is  the importance of directed feature selection. Just as many domains do not require directed exploration (such as the recent results on Atari domains), it may frequently be possible to easily learn the dynamic Bayesian network parent structure of all features passively, without requiring explicit directed exploration. In such situations our approach of doing directed feature exploration will be an unnecessary overhead. Yet it seems likely that 
in other settings directed feature selection could be very helpful, particularly in situations (such 
as many in supervised learning) where only a tiny subset of the potential feature set is required. 

Finally, the success of deep learning reinforcement learning has created approaches that 
implicitly construct features of the environment. Yet an open question is whether the 
data required to learn with such techniques scales with the input feature representation, 
or the input network structure, or rather only as function of the underlying learned 
features (and perhaps minimal network structure needed to represent such features). 
This is an important and interesting question both empirically and theoretically.

\section*{Acknowledgments}

The research reported here was supported in part by ONR Young Investigator award, an NSF CAREER award and by the Institute of Education Sciences, U.S. Department of Education, through Grants R305A130215 and R305B150008 to Carnegie Mellon University. The opinions expressed are those of the authors and do not represent views of NSF, IES or the U.S. Dept. of Education.

\bibliography{paper}
\bibliographystyle{icml2017}

\newpage
\appendix
\twocolumn[
\centering
\Large
\vspace{0.1in}
\textbf{Appendix}
\vspace{0.5in}
]
\section{Theoretical Analysis}

This section presents the main theorem as well as the supporting lemmas for the performance of \algnameshort. In the first section (Section \ref{sub:ass}), we present the assumptions we make. In Section \ref{sub:main} we present the main theorem and proof, which relies on two large lemmas. In Section \ref{sub:small}, we present several small lemmas that are used in the large lemmas. In Section \ref{sub:adak} we review the theory behind the Adaptive $k$-Meteorologist algorithm, which is used in the large Lemmas. In Section \ref{sub:episode}, we build up to the first large lemma used in the main theorem. In Section \ref{sub:exploit} we present the other large lemma used in the main theorem.

\subsection{Assumptions}
\label{sub:ass}

We first present the two main assumptions that we make.

\begin{assumption}
\label{ass:diameter}
(Diameter Assumption)
We assume the FMDP has diameter $D$. A diameter $D$ is defined as follows. Let $s,s'$ be any two states. Let $D_\pi(s,s')$ be the random variable for the number of steps it takes policy $\pi$ to start at $s$ and reach $s'$ the first time. Let $D(s,s') = \min_{\pi}\mathbb{E}(D_{\pi}(s,s'))$. A diameter $D$ means that $D \geq \max_{s,s'}D(s,s')$. It is an upper bound on the expected number of steps it takes to go between any two states.
\end{assumption}

\begin{assumption}
	\label{ass:superset}
    (Superset Assumption)
	Let $W$ be the wrong parent set for feature $x_i$. Let $U$ be the true parent set. Suppose new data has the probability of visiting a state where $W$ gives an $O(\epsilon)$-incorrect estimate of the transition probability being at least $O(\epsilon)$. Then there exists a superset $W'$ of $W$ where $W' \subset (W \cup U)$ and $|W'| \leq 2K$ such that for all settings of values $W' - W$, the transition estimate using $W$ differs more than $O(\epsilon)$ from the transition estimate using $W'$ (that is estimated from new data).
\end{assumption}

\subsection{Main Theorem}
\label{sub:main}

In this section, we give the main theorem and its proof. Note that for \algnameshort~we set $m_1 = m_2 = m$ and the precise value for $m$ is determined later (eqn \ref{eqn:m}).

\begin{theorem}
	\label{thm:main}
	Given $\epsilon>0, 0.5 > \delta > 0$. Let $T_{\epsilon,D} =\max(D,T_\epsilon)$. Let $J$ be the in-degree of the necessary features. Recall $n$ is the total number of features. Then the following is true of \algnameshort~with probability $1-\delta$
	\begin{enumerate}
		\item The total number of steps taken up to $K=J$ is $O\left(\frac{J^4 T_{\epsilon,D}^{18} n^{4J+c} |A|^4 d^{8J+c} R_{\max}^4}{\epsilon^6}\log\left(\frac{n|A|dR_{\max}KT_{\epsilon,D}}{\epsilon\delta}\right)\right)$ for some constant $c$
		\item For all $K \geq J$ i.e. at least as large as the in-degree of the necessary features, the average reward is $\epsilon$-optimal i.e. $|U - U^*| \leq \epsilon$
	\end{enumerate}
\end{theorem}

\begin{proof}
Note that our algorithm increments $K$ up to $n$. Thus using a union bound to bound the error for each $K$, we would need an error tolerance of $\delta/n$ for each $K$.

Putting together the LearnAndSelect lemma (Lemma \ref{lem:selection}), and the PAC-FactoredMDP-RL lemma (Lemma \ref{lem:rlwithselected}), the number of steps of PAC-FactoredMDP-RL dominates, resulting in $O\left(\frac{K^3 \max(D,T_\epsilon)^{18} n^{3K+4}|A|^4d^{7K+1} R_{\max}^4}{\epsilon^6}\log(nm/\delta)\right)$ steps.

Then just summing over $K$ up to $K=J$ results in $O\left(\frac{J^4 \max(D,T_\epsilon)^{18} n^{4J+c} |A|^4 d^{8J+c} R_{\max}^4}{\epsilon^6}\log(nm/\delta)\right)$, where $c$ is some constant.

Now suppose $K \geq J$. Being $\epsilon$-optimal follows from the PAC-FactoredMDP-RL Lemma.

Finally plugging in $m$ (equation \ref{eqn:m}) gets the final bound.
\end{proof}

\subsection{Small Lemmas}
\label{sub:small}

\begin{lemma}
	\label{lem:necessary}
    (Necessary Feature Lemma)
	For any policy $\pi$
	\begin{align}
	Q^{\pi}(s, a, T) = Q^{\pi}(z, a, T)
	\end{align}
	where $z$ is the state $s$ restricted to only the necessary features from $F$ i.e. $Q$-functions only depend on the necessary features. Furthermore, the transition dynamics of the unnecessary features have no effect on $Q$-functions.
\end{lemma}

\begin{proof}

Let $\pi$ be given. Let $s = (z, y)$ be the state decomposed into necessary features $z$ and unnecessary features $y$. Initialize $Q(\cdot, \cdot, 0)$ to $0$. We will perform induction. The base case for $Q(\cdot, \cdot, 0)$ is trivially true since it is a constant.

By induction
\begin{align}
& Q(s,a,T+1) \\
&= R(s,a) + \sum_{s'}P(s'|s,a) \max_{a'}Q(s',a',T) \\
&= \sum_i R_{i,a}(s) \\
&+ \sum_{s'}\prod_i P_{i,a}(s' | Par_{i,a}(s)) \max_{a'}Q(s',a',T) \\
&= \sum_i R_{i,a}(z) \\
&+ \sum_{s'}\prod_i P_{i,a}(s' | Par_{i,a}(s)) \max_{a'}Q(z',a',T) \\
&= \sum_i R_{i,a}(z) \\
&+ (\sum_{z'}\prod_{i \in z'} P_{i,a}(s' | Par_{i,a}(s)) \max_{a'}Q(z',a',T) \\
&\cdot \sum_{y'}\prod_{i \in y'} P_{i,a}(s' | Par_{i,a}(s))) \\
&= \sum_i R_{i,a}(z) \\
&+ \sum_{z'}\prod_{i \in z'} P_{i,a}(s' | Par_{i,a}(s)) \max_{a'}Q(z',a',T) \\
&= Q(z', a', T) 
\end{align}
Note that the dynamics of the unnecessary features make no difference.

\end{proof}

\begin{lemma}
	\label{lem:simulation}
    (Simulation Lemma) \cite{kearns1999efficient}
	Let $M$ be an FMDP over $n$ state variables with $l$ CPT entries in the transition model. Let $M'$ be an approximation to $M$ where all the CPTs differ by at most $\alpha = O((\epsilon/T^2 l R_{\max})^2)$. Then for any policy $\pi$, $|U^\pi_M(T) - U^\pi_{M'}(T)| \leq \epsilon$.
	
	With our notation, $l = O(n|A|d^K)$, since there is one parent set for every feature and action, and $d^K$ possible settings for each parent set of size $K$. Furthermore, we also have additional error in the reward function. An error of $\alpha$ for the reward translates to an error of $\alpha T$ for $U^\pi_{M'}$. Since $\alpha T \leq \epsilon$, it is enough for $\alpha = O((\epsilon/T^2 nd^{K}|A| R_{\max})^2)$ to also cover error in rewards.
\end{lemma}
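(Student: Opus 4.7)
The plan is to follow the standard Kearns--Koller simulation-lemma blueprint: propagate the per-entry CPT error through a single transition step, then across $T$ steps of a trajectory, and finally through to the value integral. The only atypical feature of the statement is the square in $\alpha$, which I expect to arise from converting a per-entry $L_\infty$ error into the total-variation error that actually controls the trajectory distribution.

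First I would bound the single-step joint transition error. For any state-action pair $(s,a)$ the joint next-state distribution factors as $P(s' \mid s, a) = \prod_i P_i(x_i' \mid \mathrm{Par}_i(s), a)$, and similarly for $M'$. If every CPT entry differs by at most $\sqrt{\alpha}$, then each per-feature conditional $P_i(\cdot \mid \mathrm{Par}_i(s), a)$ is at $L_1$ distance at most $d\sqrt{\alpha}$ from its counterpart in $M'$, summing over the $d$ possible child values. The standard telescoping identity for products of distributions then gives $\| P(\cdot \mid s,a) - P'(\cdot \mid s,a) \|_1 \le \sum_i \| P_i - P_i' \|_1 = O(n d \sqrt{\alpha})$, which is at most $O(l \sqrt{\alpha})$ in the parameter count $l$ of the statement.

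Next I would chain this over $T$ steps. By a coupling/hybrid argument (replace $M$'s transitions by $M'$'s one step at a time, bound each swap by the per-step TV above), the total variation distance between the $T$-step trajectory distributions induced by $\pi$ in $M$ versus $M'$ is at most $T$ times the per-step bound, i.e.\ $O(T l \sqrt{\alpha})$. Because per-step rewards are bounded by $R_{\max}$, the difference in $T$-step value functions satisfies $|U^\pi_M(T) - U^\pi_{M'}(T)| \le T R_{\max} \cdot O(T l \sqrt{\alpha}) = O(T^2 l R_{\max} \sqrt{\alpha})$. Setting this $\le \epsilon$ and solving for $\alpha$ gives exactly $\alpha = O\!\left((\epsilon / (T^2 l R_{\max}))^2\right)$, as claimed. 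An $\alpha$-error in each reward-CPT entry contributes an additional $O(T \alpha)$, which is absorbed since the required $\alpha$ is polynomially smaller than $\epsilon$.

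The main obstacle is bookkeeping: carefully tracking how per-entry $L_\infty$ error propagates first to $L_1$ distance on each one-step joint, and then to total variation on the $T$-step trajectory law, since that is precisely where the square in $\alpha$ is born. Once that chain is in place, substituting $l = O(n |A| d^K)$ (one CPT entry per feature, action, and parent configuration) and folding the reward-entry error into the same $\alpha$ yields the quoted form $\alpha = O\!\left((\epsilon / (T^2 n d^K |A| R_{\max}))^2\right)$ that the rest of the paper feeds into the Adaptive $k$-Meteorologist sample-complexity bound.
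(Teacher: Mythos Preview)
The paper does not actually prove this lemma: it is quoted from \cite{kearns1999efficient}, and the only content the authors add is the substitution $l = O(n|A|d^K)$ and the one-line remark that an $\alpha$-error in rewards contributes at most $\alpha T \le \epsilon$ to $U^\pi$. So there is no in-paper argument to compare against; your reconstruction is exactly the standard Kearns--Koller route (per-factor $L_1$ error $\to$ product $L_1$ via telescoping $\to$ $T$-step trajectory total variation $\to$ value difference), and that chain is sound.

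Where your proposal goes wrong is the explanation of the square. The statement's hypothesis is that CPT entries differ by at most $\alpha$; you silently replace this by $\sqrt{\alpha}$, run the chain to get value error $O(T^2 l R_{\max}\sqrt{\alpha})$, and then ``solve for $\alpha$'' to manufacture the square. That substitution is not a proof step, just a renaming. If you keep the hypothesis as written, your own argument yields value error $O(T^2 l R_{\max}\,\alpha)$, i.e.\ the \emph{linear} condition $\alpha = O(\epsilon/(T^2 l R_{\max}))$ already suffices. Since $(\epsilon/(T^2 l R_{\max}))^2 \le \epsilon/(T^2 l R_{\max})$ in the regime of interest, the quadratic hypothesis in the lemma is simply a looser sufficient condition, and your derivation does establish it---but your claim that the square ``arises from converting per-entry $L_\infty$ error into total variation'' is incorrect: that conversion costs a multiplicative factor of $d$ (as your own computation shows), never a square root. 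The square in the quoted bound is slack inherited from the Kearns--Koller formulation, not something your chain produces.
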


\begin{lemma}
	\label{lem:ee}
    (Explore or Exploit Lemma)
	Fix a policy $\pi$. Let $M$ and $M_K$ be MDPs such that $M$ and $M_K$ agree on some states, but differ in dynamics and rewards for other states. Then $|U^{\pi}_{M_K}(T) - U^{\pi}_{M}(T)| \leq T R_{\max}P(escape)$ where $P(escape)$ is the probability of visiting a state in which the two models differ.
\end{lemma}

\begin{proof}

Let $\tau_1$ denote trajectories that stay within states where the two models agree and $\tau_2$ denote trajectories where there are escapes to other states. Then
\begin{align}
& |U^{\pi}_{M_K}(T) - U^{\pi}_{M}(T)| \\
&= \frac{1}{T}|\sum_{\tau,|\tau| = T}P_{M}(\tau)R(\tau) - \sum_{\tau,|\tau| = T}P_{M_K}(\tau)R(\tau)| \\
&\leq \frac{1}{T}|\sum_{\tau_1}P_{M}(\tau_1)R(\tau_1) - \sum_{\tau_1}P_{M_K}(\tau_1)R(\tau_1)| \\
&+ \frac{1}{T}|\sum_{\tau_2}P_{M}(\tau_2)R(\tau_2) - \sum_{\tau_2}P_{M_K}(\tau_2)R(\tau_2)| \\
&\leq \frac{1}{T}|\sum_{\tau_2}P_{M}(\tau_2)R(\tau_2) - \sum_{\tau_2}P_{M_K}(\tau_2)R(\tau_2)| \\
&\leq \frac{1}{T}\sum_{\tau_2}|P_{M}(\tau_2)R(\tau_2) - P_{M_K}(\tau_2)R(\tau_2)| \\
&\leq \frac{1}{T} T R_{\max}\sum_{\tau_2}|P_{M}(\tau_2) - P_{M_K}(\tau_2)| \\
&= R_{\max}P(escape)
\end{align}
Because non-escapes result in exactly the same trajectories with the same dynamics, so the probability of escaping to the other states is the same in both $M$ and $M_K$.
\end{proof}

\begin{corollary}
	Suppose $\pi_1$ is the optimal policy for $M_k$ and $\pi_2$ is the optimal policy for $M$. Suppose $U^*_{M_K}(T) \geq U^*_{M}(T)$ i.e. $M_K$ is optimistic. Then $U^{\pi_1}_{M_K} \geq U^{\pi_2}_{M} - T R_{\max}P(escape)$.
\end{corollary}

\begin{proof}
\begin{align}
U^{\pi_1}_{M} &\geq U^{\pi_1}_{M_K} - R_{\max}P(escape) \\
&\geq U^{\pi_2}_{M} - R_{\max}P(escape)
\end{align}
\end{proof}

\subsection{Adaptive $k$-Meteorologist Algorithm}
\label{sub:adak}

This is from the Adaptive $k$-Meteorologist Algorithm \cite{diuk2009adaptive}. Suppose there are $k$ sub-algorithms (i.e. potential parent sets). Then the sample complexity of the Adaptive $k$-Meteorologist algorithm is
$ O(\frac{k}{\epsilon^2} \log \frac{k}{\delta}) + \sum_{i=1}^k \zeta_i\left(\frac{\epsilon}{8}, \frac{\delta}{k+1} \right)$,
where $\zeta_i$ is the sample complexity of a sub-algorithm.

Similar to Met-RMax \cite{diuk2009adaptive}, each sub-algorithm of our algorithm is a candidate parent set of features of size $K$. Thus there are $k = {n \choose K}$ sub-algorithms. Each sub-algorithm uses the samples as counts for an MLE estimate of the transition/reward multinomial distribution.
Then by Hoeffdings, we need $O(\frac{d}{\epsilon_1^2}\log(d/\delta_1))$ samples to learn each set of values of a parent set to an $L_1$ accuracy of $\epsilon_1$ (we apply Hoeffdings $d$ times, learning the probability of each outcome with Hoeffdings). Then there are $d^K$ possible sets of values for a parent set of size $K$. Then the sample complexity of each sub-algorithm is 
\begin{align}
O\left(d^K \frac{d}{\epsilon_1^2}\log(d/\delta_1)\right) \label{eqn:subalg}
\end{align}

 Then the sample complexity of the Adaptive $k$-Meteorologist algorithm is
$ O({n \choose K}/\epsilon_1^2 \log ({n \choose K}/\delta)) + {n \choose K} O\left(\frac{d^{K+1}}{\epsilon_1^2}\log({n \choose K}/\delta)\right)$, which can be simplified to
\begin{align}
O\left(\frac{n^K d^{K+1} K }{\epsilon_1^2} \log \frac{n}{\delta}\right) \label{eqn:akm}.
\end{align}

\subsection{LearnAndSelect}
\label{sub:episode}

\begin{lemma}
	\label{lem:episode}
    (Exploration Episode Lemma)
	The following holds w.p. $1-\delta_1$. At the end of each iteration of the while loop (line \ref{lin:while} -- line \ref{lin:endwhile}) in the LearnAndSelect algorithm (Algorithm \ref{alg:episode}), one of two things will happen: either the target $g$ or another feature-value vector that has not been visited $m$ times will be visited, or some feature will be eliminated as a possible parent for some other feature.
\end{lemma}

\begin{proof}

The idea behind the while loop is the Explore or Exploit lemma (Lemma \ref{lem:ee}) and the diameter assumption. The diameter assumption allows the algorithm to reach $g$ with high probability. The Explore or Exploit lemma allows the algorithm to either reach $g$ or end up gathering new data, therefore making progress towards the sub-algorithms and the Adaptive $k$-Meteorologist algorithm.

First, we compute how long $H$ needs to be in order for a good policy to reach $g$ with high probability. By the diameter assumption, there exists a policy expected to reach $g$ within $D$ steps, thereby obtaining a reward of $1$ from the artificially defined reward function $R$. By the Markov Inequality, the probability of reaching the goal within $2D$ steps is at least $\frac{1}{2}$. Thus the optimal average value within $2D$ steps is at least $\frac{1}{4D}$. If we used an $\epsilon$-optimal policy, it would have an expected value of at least $\frac{1}{4D} - \epsilon$. Let $\tau$ be trajectories of length $2D$. Then $\frac{1}{4D} - \epsilon = \frac{1}{2D}\sum_{\tau}Pr(escape)Pr(\tau | escape)TotalReward(\tau)$. The probability that the $\epsilon_1$-optimal policy reached the goal (escapes) can be lower bounded by the worst case scenario: every escape trajectory has every step giving a reward. That means the probability of reaching the goal (escape) is at least $\frac{1}{4D} - \epsilon$. Then probability of failing to reach the goal is at most $1 - \frac{1+ 2D\epsilon}{4D}$. Then by repeating this 2D-step trial $N$ times, the error probability is upper bounded by $\left(1 - \frac{1+ 2D\epsilon}{4D} \right)^N$. Then that means if we want to have a failure probability of $\delta_1$, we would need to repeat this $2D$-step sub-episode $\frac{\log(\delta_1)}{\log(1 - \frac{1+ 2D\epsilon}{4D})}$ times. We can simplify the denominator $\log(1 - \frac{1+ 2D\epsilon}{4D})$ by the upper bound $\log(1 - \frac{1}{4D})$. Note that the $\log$ function is concave, so we can upper bound it with its first order approximation around $\log(1)$ i.e. by $O(-\frac{1}{D})$. Simplifying the whole fraction becomes $O(D\log(1/\delta_1))$. We also want this to hold over every trial in which we can reach the target, meaning we want it to hold $m$ times. Thus we use the union bound and end up with
\begin{align}
O(D^2\log(m/\delta_1)) \label{eqn:trialsteps}
\end{align}
as the number of steps we need before reaching the goal with high probability. Thus this is a lower bound for $H$ and we also know in this case the while loop will terminate early after these many steps.

Now we consider the case when the probability of escaping is at least $\epsilon$. Then we need a much larger $H$ because we need the data from getting stuck to run the Superset test (line \ref{lin:ss}). This data is also used towards Adaptive $k$-Meteorologists to help with eliminating incorrect parent sets.

For the Superset test, we apply the Superset assumption (Assumption \ref{ass:superset}). We know that we got stuck so the data that we have is where the escape probability is high, thus meeting the superset assumption requirements of visiting distinguishing states (states where our model is incorrect). Because of the superset assumption (Assumption \ref{ass:superset}), we know that whenever we run the superset test, we will eliminate at least one parent set. Since there are a finite number of parent sets and a finite number of features, we cannot keep running the superset test forever. Eventually we will either eliminate enough features to make $g$ unreachable, or we will eliminate all incorrect parent sets, leaving a correct model that we can use to reach the target $g$. 

The value of $m$ comes from the sample complexity result of learning a single parent set from Adaptive $k$-Meteorologist, which is $O(\frac{d}{\epsilon_1^2}\log(1/\delta_1))$, and subbing in the Simulation lemma (Lemma \ref{lem:simulation}) with $\epsilon_1 = \alpha$. However we want this to hold over all parents and all values, so we also need to perform a union bound over ${n \choose K}d^K$, resulting in
\begin{align}
m=O\left(\frac{K^2 n^4|A|^4d^{4K+1} R_{\max}^4}{\epsilon^4 \max(D,T_\epsilon)^{-16}}\log(nd/\delta_1)\right) \label{eqn:m}
\end{align}
Note that the $\max(D,T_\epsilon)$ term is from using it in the simulation lemma (Lemma \ref{lem:simulation}), in order to have an $\epsilon$-optimal policy over that length.

Now we count how much data we need from getting stuck to perform the Superset Test. We need to gather new data for the prediction of supersets of size $2K$. From the discussion on the pigeonhole principle in the superset assumption, all we need is $d^{2K} m$ samples in order to accumulate enough data. However since we only have an escape probability of $\epsilon$, we need to add repeats to escape with high probability, just like we did earlier. This means we need an additional factor of $O(\frac{1}{\epsilon}\log(1/\delta_1))$. So we need 
\begin{align}
H = O\left(\frac{d^{6K+1}n^{4} K^2 |A|^4 R_{\max}^4}{\epsilon^5 \max(D,T_\epsilon)^{-18}}\log\left(\frac{nm}{\delta_1}\right)\right) \label{eqn:supersetdata}
\end{align}

\end{proof}

\begin{lemma}
	\label{lem:selection}
    (LearnAndSelect Lemma)
	The following holds w.p. $1-\delta_1$. After LearnAndSelect (Algorithm \ref{alg:episode}) is finished, all targets $g$ will either have been visited $m$ times, or one of its features will have been eliminated. If $K \geq J$, all necessary features will remain. This will take $O\left(\frac{K^2 n^{3K+4}|A|^4d^{7K+1} R_{\max}^4}{\epsilon^5 \max(D,T_\epsilon)^{-18} }\log(nm/\delta_1)\right)$ steps.
\end{lemma}

\begin{proof}

Now we will count how many steps LearnAndSelect (Algorithm \ref{alg:episode}) will take. Every while loop (line \ref{lin:while} -- line \ref{lin:endwhile}) contributes to one of two cases: visiting a feature-value vector that has not yet been visited $m$ times, or the Superset Test.

Since there are at most $O((dn)^{K})$ superset tests (each test eliminates at least one parent set), and we know how much data each superset test requires (equation \ref{eqn:supersetdata}), we combine those to get a total of
\begin{align}
H = O\left(\frac{d^{7K+1}n^{K+4} K^2 |A|^4 R_{\max}^4}{\epsilon^5 \max(D,T_\epsilon)^{-18}}\log\left(\frac{nm}{\delta_1}\right)\right)
\end{align}
steps towards superset tests.

Our targets are subsets of features and values of size $2K$, thus there are $O((dn)^{3K})$ targets. Each target needs to be visited $m$ times, thus $O((dn)^{3K}m)$ total steps are needed. Then the number of steps this all takes is
\begin{align}
O\left(\frac{K^2 n^{3K+4}|A|^4d^{7K+1} R_{\max}^4}{\epsilon^4 \max(D,T_\epsilon)^{-18}}\log(nm/\delta_1)\right)
\end{align}

The contributions to Adaptive $k$-Meteorologist are incidental, and are already counted as part of the superset tests. Thus, combining the number of steps that contribute to reaching targets and the number of steps for superset tests, we get
\begin{align}
O\left(\frac{K^2 n^{3K+4}|A|^4d^{7K+1} R_{\max}^4}{\epsilon^5 \max(D,T_\epsilon)^{-18}}\log(nm/\delta_1)\right) \label{eqn:exploration}
\end{align}
steps before LearnAndSelect finishes.
\end{proof}

\subsection{Instantiating PAC-FactoredMDP-RL}
\label{sub:exploit}

\begin{algorithm}[h]
	\begin{algorithmic}[1]
  \STATE \textbf{Input:} $K, F, m$
  \STATE Run and update $\pi$ along with the Adaptive $k$-Meteorologist Algorithm for $O\left(\frac{(dn)^{3K+1} K D^{18} n^4 T_{\epsilon}^9 |A|^4 R_{\max}^4}{\epsilon^5} \log \frac{nm}{\delta}\right)$  steps
  \begin{itemize}
  \item $\pi$ = optimistic policy
  \end{itemize}
  \end{algorithmic}
 \caption{PAC-FactoredMDP-RL}
 \label{alg:rl_given_subset}
\end{algorithm}

\begin{lemma}
	\label{lem:rlwithselected}
    (PAC-FactoredMDP-RL Lemma)
	Suppose $K \geq J$. Then we instantiate PAC-FactoredMDP-RL with a specific algorithm (Algorithm \ref{alg:rl_given_subset}). Then it will execute for $O\left(\frac{K^2  n^{3K+4}|A|^4d^{7K+1} R_{\max}^4}{\epsilon^5 \max(D,T_\epsilon)^{-18}} \log(nm/\delta_1)\right)$ steps. The average reward over LearnAndSelect (Algorithm \ref{alg:episode}) and PAC-FactoredMDP-RL combined is $\epsilon$-optimal, with probability $1-\delta$.
\end{lemma}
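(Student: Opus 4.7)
(Proposal)

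The plan has three main pieces: (i) show that with $K \geq J$ the instantiated Adaptive $k$-Meteorologist variant converges to a near-optimal policy on the restricted feature set $F$; (ii) bound the number of steps it needs; (iii) average the rewards of the LearnAndSelect phase and the PAC-FactoredMDP-RL phase so that the combined average is $\epsilon$-close to $U^*$.

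First I would invoke Lemma \ref{lem:selection}: with $K \geq J$, after LearnAndSelect the remaining feature set $F$ contains every necessary feature, and every feature in $F$ already has $m$ samples available for each feature-value vector of size $2K$ that was used as a target. By Lemma \ref{lem:necessary}, the optimal $Q$-functions depend only on the necessary features, so the FMDP restricted to $F$ has the same optimal value $U^*$ as the original FMDP, and we may legitimately treat the eliminated features as nonexistent. Inside this restricted FMDP the in-degree is at most $K$, so parent sets of size $K$ suffice.

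Next I would run Algorithm \ref{alg:rl_given_subset}, which maintains candidate parent sets and an optimistic model, and follow the optimistic policy $\pi$. The analysis is the standard ``explore-or-exploit'' argument: by the corollary to Lemma \ref{lem:ee}, at every step either $\pi$ is already $\epsilon$-optimal on the true FMDP, or the probability of escaping to an ``unknown'' state (one where the current MLE parent-set estimate has not yet seen enough samples) is at least $\Omega(\epsilon / (T_\epsilon R_{\max}))$. In the second case, by the pigeonhole argument from Section~3.1, escapes feed samples into all supersets of size $\leq 2K$, which via the $k$-Meteorologist update rule (Section \ref{sub:adak}, equations \ref{eqn:subalg}--\ref{eqn:akm}) either tighten an MLE or eliminate a wrong parent set. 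I would combine the simulation accuracy $\alpha$ from Lemma \ref{lem:simulation} with the per-parent-set sample complexity $m$ from equation \ref{eqn:m} and apply a union bound over all $\binom{n}{K} d^K$ parent set/value combinations to fix the failure probability at $\delta_1$. Inflating by the $O((T_\epsilon/\epsilon)\log(1/\delta_1))$ factor needed to escape when the escape probability is only $\Omega(\epsilon)$ gives the stated step count $O\!\left(\tfrac{K^2 n^{3K+4}|A|^4 d^{7K+1} R_{\max}^4}{\epsilon^5 \max(D,T_\epsilon)^{-18}}\log(nm/\delta_1)\right)$, matching the same order as the LearnAndSelect bound in Lemma \ref{lem:selection}.

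The main obstacle, and the final step, is the combined average-reward guarantee. During LearnAndSelect we make no promise about reward; during PAC-FactoredMDP-RL the time-averaged reward on each $T_\epsilon$-window is within $\epsilon$ of $U^*$ by Lemma \ref{lem:ee} and the mixing-time definition of $T_\epsilon$. Let $T_1$ and $T_2$ denote the two phase lengths. The combined average over $T_1+T_2$ steps satisfies
\begin{equation}
\Big| \tfrac{1}{T_1+T_2}\!\sum_{t=1}^{T_1+T_2} r_t - U^* \Big|
\;\leq\; \tfrac{T_1 R_{\max}}{T_1+T_2} + \tfrac{T_2}{T_1+T_2}\,\epsilon .
\end{equation}
I would show that the step counts in Lemma \ref{lem:selection} and Algorithm \ref{alg:rl_given_subset} are of the same order in $n,d,K,|A|,R_{\max},T_{\epsilon,D}$, so choosing the PAC-FactoredMDP-RL budget to be a large enough constant multiple of $T_1$ (absorbed into the $O(\cdot)$) makes $T_1 R_{\max}/(T_1+T_2) \leq \epsilon$, yielding a combined deviation of $2\epsilon$ which, by rescaling $\epsilon$, gives the claimed $\epsilon$-optimality. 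A final union bound over the $n$ values of $K$ and the two phases converts $\delta_1$ into $\delta$, completing the proof.
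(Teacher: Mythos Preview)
Your proposal follows essentially the same route as the paper: invoke Lemma~\ref{lem:selection} to guarantee the necessary features survive, appeal to Lemma~\ref{lem:necessary} so the restricted FMDP has the same optimal value, run the optimistic policy and use the Explore-or-Exploit Lemma together with the Adaptive $k$-Meteorologist sample complexity and the Simulation Lemma to count the non-near-optimal steps, and finally average the two phases.

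There is one quantitative slip in your averaging step. You write that ``choosing the PAC-FactoredMDP-RL budget to be a large enough constant multiple of $T_1$'' makes $T_1 R_{\max}/(T_1+T_2)\le\epsilon$. A constant multiple does not achieve this: if $T_2=cT_1$ for a constant $c$, then $T_1 R_{\max}/(T_1+T_2)=R_{\max}/(1+c)$, which does not shrink with $\epsilon$. You need $T_2\ge T_1(R_{\max}/\epsilon-1)$, i.e.\ an extra factor of order $R_{\max}/\epsilon$, not a constant. The paper handles this explicitly: ``we need $O(1/\epsilon)$ times as many $\epsilon$-optimal steps to average $O(\epsilon)$ error,'' and this is why the final displayed bound in the paper's proof carries $\epsilon^6$ in the denominator (equation~\ref{eqn:exploitation}) rather than the $\epsilon^5$ appearing in the LearnAndSelect count. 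Apart from this factor, your argument matches the paper's.
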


\begin{proof}

Suppose $K$ is at least as large as the in-degree of the necessary features. By the LearnAndSelect lemma (Lemma \ref{lem:selection}), the necessary features still remain and have not been eliminated. Also, we will have at least $m$ samples from the true parents of all necessary features (and rewards), and thus the true parents of all the necessary features still remain as candidates models that have not yet been eliminated by Adaptive $k$-Meteorologists.

Then we keep executing an optimistic, optimal policy. Since after LearnAndSelect we have accrued data from all remaining parent sets as well as all pairs of parent sets, no sub-algorithm will make a null prediction. Furthermore, Adaptive $k$-Meteorologist relies on gathering data from pairs of conflicting parent sets when those parent sets give different predictions. Since we have also targeted those pairs of parent sets, we know those pairs are all reachable with our remaining candidate models. LearnAndSelect has already eliminated all the features that were unreachable, so we know we will be able to get the correct distinguishing data we need. Thus during RL with Selected Features we will plan by picking the most optimistic model possible out of all remaining candidate parent sets.

As long as we keep visiting states where Adaptive $k$-Meteorologists has reached consensus and do not escape (to states without consensus), then the average reward will be $\epsilon$-close to the predicted average reward. Using the Necessary Feature lemma (Lemma \ref{lem:necessary}), the transition dynamics of the unnecessary features do not matter, and since our model is optimistic, the average reward over those steps will be $\epsilon$-close to the optimal policy. 

From the Explore or Exploit lemma (Lemma \ref{lem:ee}) if we are not $\epsilon$-optimal, then it means we have at least an $\epsilon$ probability of escaping. Using the same reasoning about repeating trials for high probability in Lemma \ref{lem:episode}, we need $O(\frac{1}{\epsilon}\log(1/\delta))$ repeats to escape with high probability. If we reach states where consensus fails, then we accrue a new sample towards eliminating an incorrect parent set. From Adaptive $k$-Meteorologist (equation \ref{eqn:akm}), the total sample complexity is $O\left(\frac{n^K d^{K+1} K }{\epsilon_1^2} \log \frac{n}{\delta}\right)$. We use the Simulation lemma (Lemma \ref{lem:simulation}) to relate model error with policy error, resulting in $O\left(\frac{n^K d^{5K+1} K n^4 T_{\epsilon}^9 |A|^4 R_{\max}^4}{\epsilon^5} \log \frac{n}{\delta}\right)$; furthermore, an additional factor of $T^{\epsilon}/\epsilon$ was also added because we need a stretch of $T_{\epsilon}$ consecutive steps with consensus in order to attain near-optimal average reward (i.e. a failed consensus can affect a stretch of $T_{\epsilon}$ steps), and to escape with high probability.

In order to offset the sample complexity count just computed, we need $O(1/\epsilon)$ times as many $\epsilon$-optimal steps to average $O(\epsilon)$ error. We also want to offset the steps from LearnAndSelect. Note that the number of steps from LearnAndSelect dominates (equation \ref{eqn:exploration}). Thus the total number of steps required is
\begin{align}
O\left(\frac{K^2  n^{3K+4}|A|^4d^{7K+1} R_{\max}^4}{\epsilon^6 \max(D,T_\epsilon)^{-18}} \log(nm/\delta_1)\right) \label{eqn:exploitation}
\end{align}
\end{proof}

\section{Toggle Domain}

Toggle has one binary necessary feature $f_1$ and potentially two binary unnecessary features $f_2,f_3$. There are two actions, $a_1,a_2$. The parent set of $f_1$ is $(f_1)$, and the parent sets of $f_2,f_3$ are both $(f_2,f_3)$; this means the in-degree of necessary features is $1$ and the in-degree of the whole domain is $2$.

For states where $f_1 = 0$, action $a_1$ is optimal and gives a stochastic reward of 1 with probability $0.05$ whereas action $a_2$ gives a deterministic reward of $0.025$ and is meant to make learning difficult. Action $a_1$ also deterministically transitions to $f_1=1$ and action $a_2$ deterministically stays the same. When $f_1=1$, the dynamics of the actions are switched, so the optimal policy is to keep alternating the corresponding optimal action. For the unnecessary features, no actions give any reward. When $(f_2,f_3)=(0,0)$, action $a_1$ has an independent probability of $0.1$ of setting either feature value to 1, and otherwise $a_1$ sets them both back to 0. When $(f_2,f_3)=(1,1)$, action $a_2$ deterministically keeps both values at 1, and otherwise sets them back to 0. These dynamics make it clear that the in-degree of the unnecessary features is 2.

We constructed Toggle we need $m=100$ to learn accurate dynamics. Furthermore, the two unnecessary features $f_2,f_3$ satisfy the Superset Assumption.

\end{document}